\tikzset{container/.style={draw, rectangle, dashed, inner sep=2em}}
\pgfplotsset{compat=1.13}
\newtheorem{lemma}{Lemma}
\title{\LARGE \bf Modular  Robot and Landmark Localisation Using Relative Bearing Measurements}
\author{Behzad Zamani and Jochen Trumpf and Chris Manzie
\thanks{B. Zamani and C. Manzie are with the Department of Electrical and Electronic Engineering, 
University of Melbourne,
Melbourne, Australia
{\tt\small behzad.zamani@unimelb.edu.au, manziec@unimelb.edu.au}}
\thanks{J. Trumpf is with the School of Engineering, The Australian National University, Canberra, Australia
{\tt\small Jochen.Trumpf@anu.edu.au}}}
\begin{document}

\maketitle
\thispagestyle{empty}
\pagestyle{empty}

\begin{abstract}
    In this paper we propose a modular nonlinear least squares filtering approach for systems composed of independent subsystems. The state and error covariance estimate of each subsystem is updated independently, even when a relative measurement simultaneously depends on the states of multiple subsystems. We integrate the Covariance Intersection (CI) algorithm as part of our solution in order to prevent double counting of information when subsystems share estimates with each other. An alternative derivation of the CI algorithm based on least squares estimation makes this integration possible.  We particularise the proposed approach to the robot-landmark localization problem. In this problem, noisy measurements of the bearing angle to a stationary landmark position measured relative to the SE(2) pose of a moving robot couple the estimation problems for the robot pose and the landmark position.  In a randomized simulation study, we benchmark the proposed modular method against a monolithic joint state filter to elucidate their respective trade-offs. In this study we also include variants of the proposed method that achieve a graceful degradation of performance with reduced communication and bandwidth requirements.
\end{abstract}

\section{Introduction}
Many robotic applications are comprised of multiple separate subsystems, each receiving and processing measurements that inform the decision making of the overall system.  
 Simultaneous Localization and Mapping (SLAM)~\cite{durrant2006simultaneous, ebadi2023present}, target tracking~\cite{bar2004estimation}, collaborative localization~\cite{roumeliotis2002distributed}, self-driving cars~\cite{patole2017automotive}, and augmented/virtual reality headsets~\cite{park2021review} are examples that all  share this characteristic. In an ideal scenario, the subsystem states would all be concatenated into a single joint state, and a state estimator designed for the overall system.  However in practice, the development of complex systems undergoes mutliple design iterations, and there is motivation to develop modular approaches to the measurement and fusion of signals.  Modularity helps with managing the complexity of the design (such as the size of state vector representations and error covariance matrices), separation of concerns, debugging, interoperability and change management. 

In this paper we propose a modular nonlinear least squares filtering approach for systems composed of independent subsystems. We update the state and error covariance estimate of each subsystem independently, even when a relative measurement simultaneously depends on the states of multiple subsystems. In this setup, information exchanged between the subsystems is not statistically independent and does not contain (tracked) cross correlations~\cite{uhlmann2003covariance,ChenMehra2002estimation}. 
Kalman filter~\cite{Kalman} based fusion algorithms or similar methods are not applicable as they either require that statistically identically and independently distributed signals are exchanged or they require that the cross covariance information is available or tracked. We integrate the Covariance Intersection (CI) algorithm~\cite{julierUhlmann97CI} as part of our solution in order to prevent double counting of information when the information flow graph describing how subsystems share estimates contains a loop. We provide an alternative derivation of the CI algorithm based on least squares filtering to make this integration possible. Three variants of the proposed modular approach will be discussed that offer reduced requirements on communication, computation or both while achieving a graceful degradation of performance compared to our main results.

We particularise the proposed approach to the problem of robot-landmark localization using relative bearing measurements. In this problem, noisy measurements of the bearing angle to a stationary landmark position measured relative to the SE(2) pose of a moving robot couple the estimation problems for the robot pose and landmark position. We derive nonlinear state and error covariance estimation algorithms that tackle this problem in a modular and numerically robust manner.  In a randomized simulation study, we benchmark the proposed methods against each other as well as against a non-modular joint state filter to elucidate their respective trade-offs. 

Similar ideas to our modular estimation approach can be found in~\cite{julier2007using, arambel2001covariance, roumeliotisCI,chang2021resilient} and the related papers they cite. In~\cite{julier2007using} a CI based distributed robot landmark update method is proposed towards an EKF based formulation of the SLAM problem. Cooperative self localisation algorithms based on CI are explored in~\cite{arambel2001covariance, chang2021resilient} where every subsystem estimates the joint state of the overall network. The approach in~\cite{roumeliotisCI} is closest to ours, albeit specialised to an EKF-CI based multi-robot cooperative localisation algorithm. In contrast to these approaches, in this paper we offer a fully modular nonlinear least squares filtering approach to the highly nonlinear problem of bearing based robot-landmark localisation.          

The remainder of the paper is organised as follows. Section~\ref{sec:prob} introduces the modular state estimation problem. A nonlinear least squares approach to address this problem is proposed in Section~\ref{sec:theory}. Section~\ref{sec:robot-landmark} particularises the proposed modular least squares results to the problem of robot landmark localisation using relative bearing measurements and robot specific measurements. A randomised simulation study is provided in Section~\ref{sec:sims} that benchmarks the proposed modular methods against a non-modular classical approach. Section~\ref{sec:conclusions} concludes the paper.  

%

\section{Problem Formulation}\label{sec:prob}
Consider a plant with two states 
$x_1\in\mathbb{R}^{n_1}$ and $x_2\in\mathbb{R}^{n_2}$. Let us assume there exist two independent prior estimates $(\hat{x}_{i}(0),P_{i}(0))$, $i=1,2$, where
\begin{align}
    &\hat{x}_{i}(0)=\mathcal{E}[x_i(0)], \notag \\
    &P_{i}(0)=\mathcal{E}[\big(x_i(0)-\hat{x}_{i}(0)\big)\big(x_i(0)-\hat{x}_{i}
    (0)\big)^{\top}],\\
    &P_{1,2}(0)=\mathcal{E}[\big(x_1(0)-\hat{x}_{1}(0)\big)\big(x_2(0)-\hat{x}_{2}(0)\big)^{\top}]=0_{n_1\times n_2} \notag
\end{align}
and $\mathcal{E}[\cdot]$ denotes the expected value.
Assume further that a set of measurements $(z_i,\;W_i)$ and $(z_{1,2}, W_{1,2})$ are regularly available that provide information on $x_i$ or on both $x_1$ and $x_2$, respectively. Here, we denote by $z_i\in\mathbb{R}^{p_i}$ a measurement signal on $x_i$ with dimension $p_i$ and by $W_i\in\mathbb{R}^{p_i\times p_i}>0$ a positive definite estimated measurement error covariance of $z_i$. Similarly, $z_{1,2}\in\mathbb{R}^{p_{1,2}}$ is the measurement signal on both states $x_1$ and $x_2$ with dimension $p_{1,2}$ and estimated measurement error covariance $W_{1,2}\in\mathbb{R}^{p_{1,2}\times p_{1,2}}>0.$ 

The standard approach in signal processing is to use these measurements to update the joint state and error covariance estimate $(\hat{x}\in\mathbb{R}^{n_1+n_2}$, $P\in\mathbb{R}^{(n_1+n_2)\times(n_1+n_2)})$. The latter will involve the individual error covariances $P_1$ and $P_2$ as well as the cross covariance $P_{1,2}$. The cross covariance does not remain zero if the chosen estimation algorithm stochastically couples the estimates $\hat{x}_1$ and $\hat{x}_2$. In this paper, we aim to recursively update the subsystem estimates $(\hat{x}_i,P_i)$ in a modular fashion. Figure~\ref{fig:block} illustrates the modular block diagram of this approach. Each subsystem receives any measurement that bears information on its own state. In order to process relative measurements, which depend on both subsystem states, subsystems have to share their state and error covariance estimates with each other. However, cross covariances $P_{1,2}$ will not be tracked so that the subsystems can be decoupled and designed independently. The problem is how to perform modular decoupled state estimation without causing problems due to information double counting as a consequence of not tracking the cross correlation of information (the cross covariance $P_{1,2}$).  

Note that necessarily some optimality and/or accuracy will be lost in the modular approach as compared to a joint state estimator. 
Nevertheless, the modularity of the approach may make it an attractive option especially for large scale complex plants.

\begin{figure}
    \centering
\begin{tikzpicture}
      \node[rectangle, align=center, draw, minimum width=2em, minimum height=2em] (rect1) at (2,0){Subsystem 1\\state: $x_1$};
      
  \node[rectangle, align=center, draw, minimum width=2em, minimum height=2em] (rect2) at (6,0) {Subsystem 2 \\ state: $x_2$};
  
  \node[circle, align=center, draw, minimum size=.2em] (circ3) at (4,3) {Relative \\ Measurement};
  
\node[circle, draw, align=center, minimum size=.2em] (circ1) at (1.5,2.8) {$x_1$ \\Measurement};

\node[circle, align=center, draw, minimum size=.2em] (circ2) at (6.5,2.8) {$x_2$ \\Measurement};

\node[container, fit=(rect1) (rect2)] (plant) {};
\node at (plant.north west) [below right, node distance=0 and 0] {Plant};

  \draw[->] (rect1.350) -- (rect2.190) node[midway, below] {$(\hat{x}_1,P_1)$};
  \draw[->] (rect2.170) -- (rect1.10) node[midway, above] {$(\hat{x}_2,P_2)$};
  \draw[->] (circ3) -- (plant) node[midway, left] {$(z_{1,2},W_{1,2})$};
  \draw[->] (plant.90) -- (rect2.90) {};
  \draw[->] (plant.90) -- (rect1.90) {};
  \draw[->] (circ1) -- (rect1);
  \node at (circ1.south) [below left, shift=({3mm,1mm})] {$(z_{1},W_{1})$};
  \draw[->] (circ2) -- (rect2);
  \node at (circ2.south) [below right, shift=({-1.65cm,1mm})] {$(z_{2},W_{2})$};
\end{tikzpicture}    
    \caption{Modular Architecture.}
    \label{fig:block}
\end{figure}
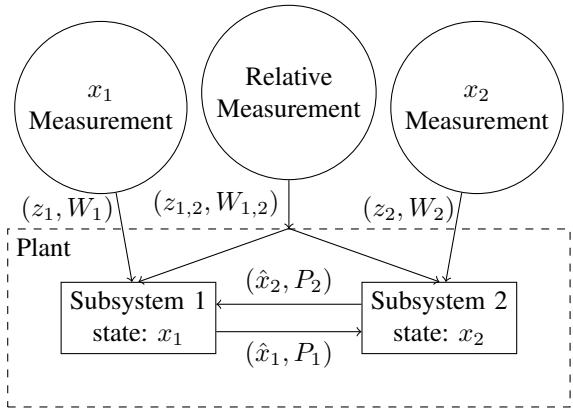

\section{Modular processing of nonlinear relative measurements}\label{sec:theory}
In this section we propose a method for updating the estimate of a sub-module state ($x_1$) based on a relative measurement ($z_{1,2}$) that also depends on the state ($x_2$) of another sub-module. Joint plant state estimation or subsystem state estimation based on measurements only depending on their own states can be done in the usual way, e.g. using the Extended Kalman Filter (EKF)~\cite{Anderson} in the case of nonlinear problems, but this approach does not work for our problem as it would require to track the cross covariance. 

Let $(\hat{x}_1,P_1)$ and $(\hat{x}_2,P_2)$  denote the estimates for $x_1$ and $x_2$ and their error covariances, respectively. The following quadratic costs measure the uncertainty of these estimates,
\begin{equation}
    \begin{split}
        & \frac{1}{2}\lVert x_1-\hat{x}_1\rVert^2_{P_1^{-1}},\; \frac{1}{2}\lVert x_2-\hat{x}_2\rVert^2_{P_2^{-1}},
    \end{split}
\end{equation}
where $\lVert x\rVert^2_{P^{-1}}\triangleq x^\top P^{-1}x$.
Let $(z_{1,2},W_{1,2})$ denote a measurement that nonlinearly depends on both $x_1$ and $x_2$ and its estimated measurement error covariance.
Denote by $h_{1,2}:\mathbb{R}^{n_{1}}\times \mathbb{R}^{n_{2}}\times\mathbb{R}^{p_{1,2}}\longrightarrow\mathbb{R}^{p_{1,2}}$ a nonlinear \emph{error map}, i.e. a map that takes the value zero when the measurement is exact. Then the following quadratic cost is centered around zero and measures the uncertainty of $z_{1,2}$ with respect to $x_1$ and $x_2$,
\begin{equation}\label{eq:mutualcost}
     \frac{1}{2} \lVert h_{1,2}(x_1,x_2, z_{1,2}) \rVert^2_{W_{1,2}^{-1}}.
\end{equation}
This general setup specialises to useful and familiar cases such as linear relative measurements $ h^l_{1,2}(x_1,x_2, z_{1,2}) = z_{1,2} - (x_1 - x_2)$, or the nonlinear relative measurements considered in Section~\ref{sec:robot-landmark}. 
The task of updating the estimate for $x_1$ consists of fusing the prior estimates and the relative measurement information by minimising a cost functional. 
In a first step, we will minimise a combined measure of uncertainty of the prior estimate $\hat{x}_2$ of $x_2$ and the relative measurement $z_{1,2}$ with respect to $x_2$. In a second step we will then fuse in the prior estimate $\hat{x}_1$ of $x_1$. 
Consider the least squares cost capturing the overall uncertainty of information about $x_2$,
\begin{equation}\label{eq:LSC}\begin{split}
        J_{1,2}(x_1,x_2) &= \frac{1}{2} \lVert h_{1,2}(x_1,x_2,z_{1,2}) \rVert^2_{W_{1,2}^{-1}} \\
        &\hspace{2cm} + \frac{1}{2} \lVert x_2 -\hat{x}_2\rVert^2_{P_2^{-1}}.
\end{split}\end{equation}
\begin{lemma}[Nonlinear Parallel Sum]\label{lem:NPS}
Let $H_i$ $\triangleq$ $\nabla_{x_i}h_{1,2}(\hat{x}_1, \hat{x}_2,z_{1,2})$, $i=1,2$.
 Given the prior estimate $(\hat{x}_2,P_2)$ and the measurement $(z_{1,2},W_{1,2})$, the optimal value of the overall uncertainty cost~\eqref{eq:LSC} in terms of $x_1$, up to second order approximation error, is given by
    \begin{equation}
    \begin{split}
        &J^*_{1,2}(x_1) = \frac{1}{2} \lVert h_{1,2}(x_1,\hat{x}_2,z_{1,2}) \rVert^2_{\tilde{W}_{1}^{-1}}, 
        \end{split}
    \end{equation}
    where $\tilde{W}_{1}\triangleq W_{1,2} + H_{2}P_2H_{2}^{\top}$.
\end{lemma}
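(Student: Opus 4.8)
\emph{Proof idea.} The plan is to fix $x_1$, carry out the inner minimisation of the least squares cost~\eqref{eq:LSC} over $x_2$ in closed form, and then identify the resulting optimal value as the claimed quadratic. First I would linearise the error map in its second argument about the prior mean $\hat{x}_2$, writing $h_{1,2}(x_1,x_2,z_{1,2}) \approx h_{1,2}(x_1,\hat{x}_2,z_{1,2}) + H_2\,(x_2-\hat{x}_2)$; here the exact Jacobian $\nabla_{x_2}h_{1,2}(x_1,\hat{x}_2,z_{1,2})$ is replaced by the constant $H_2 = \nabla_{x_2}h_{1,2}(\hat{x}_1,\hat{x}_2,z_{1,2})$ from the statement, which together with truncating the Taylor expansion of $h_{1,2}$ in $x_2$ at first order is the source of the ``second order approximation error'' referred to in the statement. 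Introducing the shorthands $a \triangleq h_{1,2}(x_1,\hat{x}_2,z_{1,2})$ (a function of $x_1$ only) and $\delta \triangleq x_2-\hat{x}_2$, the cost~\eqref{eq:LSC} becomes the strictly convex quadratic $\tfrac12\lVert a + H_2\delta\rVert^2_{W_{1,2}^{-1}} + \tfrac12\lVert \delta\rVert^2_{P_2^{-1}}$ in $\delta$.

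Next I would minimise this quadratic over $\delta$. Because $P_2>0$ and $W_{1,2}>0$, its Hessian $M \triangleq H_2^{\top} W_{1,2}^{-1} H_2 + P_2^{-1}$ is positive definite, so the stationarity condition has the unique solution $\delta^{\star} = -M^{-1}H_2^{\top} W_{1,2}^{-1}a$. Substituting $\delta^{\star}$ back (completing the square) gives
\[
J^{\star}_{1,2}(x_1) = \tfrac12\, a^{\top}\big(W_{1,2}^{-1} - W_{1,2}^{-1}H_2 M^{-1} H_2^{\top} W_{1,2}^{-1}\big)a .
\]

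The final step is to recognise the matrix in parentheses, via the Woodbury matrix identity, as $(W_{1,2} + H_2 P_2 H_2^{\top})^{-1} = \tilde{W}_1^{-1}$, which yields $J^{\star}_{1,2}(x_1) = \tfrac12\lVert h_{1,2}(x_1,\hat{x}_2,z_{1,2})\rVert^2_{\tilde{W}_1^{-1}}$, exactly the claimed expression; the name ``parallel sum'' reflects that, written in information form, $\tilde{W}_1^{-1}$ is the parallel sum of the measurement information $W_{1,2}^{-1}$ and the prior information propagated through $H_2$. I do not expect any individual step to be a genuine obstacle — the manipulation is a textbook combination of completing the square and the matrix inversion lemma. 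The one point deserving care is the bookkeeping of approximation orders: one has to check that both truncating the expansion of $h_{1,2}$ in $x_2$ and freezing its Jacobian at the prior point contribute only the advertised second-order error to $J^{\star}_{1,2}$, and in particular that the full nonlinear dependence of the result on $x_1$ is retained, since no linearisation in $x_1$ was performed and the conclusion is stated in terms of $h_{1,2}(x_1,\hat{x}_2,z_{1,2})$ rather than a linear surrogate.
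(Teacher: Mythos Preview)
Your proposal is correct and matches the paper's own proof essentially step for step: linearise $h_{1,2}$ in $x_2$ about $\hat{x}_2$ with the frozen Jacobian $H_2$, solve the resulting quadratic via the stationarity condition $\delta^\star=-(H_2^{\top}W_{1,2}^{-1}H_2+P_2^{-1})^{-1}H_2^{\top}W_{1,2}^{-1}a$, substitute back, and invoke the matrix inversion lemma to obtain $\tilde{W}_1^{-1}$. The only cosmetic difference is that the paper writes down the exact first-order condition and then linearises it, whereas you linearise the cost first and then minimise; the algebra and the result are identical.
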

\begin{proof}
    Let $x_2^*\triangleq \underaccent{x_2 }{\mbox{argmin}}\;J_{1,2}(x_1,x_2)$. 
    The first order minimality condition with respect to $x_2$ is
\begin{equation*}\begin{split}
    &(\nabla_{x_2}h_{1,2}(x_1,x^*_2,z_{1,2}))^{\top}W_{1,2}^{-1} h_{1,2}(x_1,x^*_2,z_{1,2})\\
    &\hspace{2cm}+P_2^{-1}(x^*_2 -\hat{x}_2)=0.
    \end{split}
    \end{equation*}
    By Taylor expansion with respect to $x_2$ around $\hat{x}_2$ and evaluation at $x_2= x_2^*$ we get $h_{1,2}(x_1,x^*_2,z_{1,2})= h_{1,2}(x_1,\hat{x}_2,z_{1,2}) + H_2(x^*_2-\hat{x}_2)$. Then, since $H_2^{\top}W_{1,2}^{-1}H_2 + P_2^{-1}$ is invertible,
    \begin{equation*}
        x^*_2 = \hat{x}_2 - \big(H_2^{\top}W_{1,2}^{-1}H_2 + P_2^{-1}\big)^{-1}\!H_2^{\top}W_{1,2}^{-1}  h_{1,2}(x_1,\hat{x}_2,z_{1,2}).
    \end{equation*}
    Replacing $x_2^*$ into~\eqref{eq:LSC} and applying the matrix inversion lemma completes the proof.
\end{proof}
Note that the information about $x_2$ contained in the relative measurement $z_{1,2}$ is independent of the prior estimate $\hat{x}_2$ of $x_2$. This is because that information is only dependent on the true signals $x_1$ and $x_2$ and the measurement errors inherent to the particular measurement device that produced $z_{1,2}$. This explains why we are able to perform a fusion step in a standard least squares filtering fashion (Lemma~\ref{lem:NPS}).

This is no longer the case in the second step where the objective is to fuse the prior information on $x_1$, $(\hat{x}_1,P_1)$ with the information encoded in the functional $J^*_{1,2}(x_1)$. This is because $J^*_{1,2}(x_1)$ depends on $(\hat{x}_2,P_2)$ which may have been produced based on $(\hat{x}_1,P_1)$ itself in a prior computation round. A fusion step in standard least squares filtering fashion then may lead to double counting of prior information. 
This phenomenon is well known and has given rise to extensive research on safe fusion of correlated information in the absence of cross covariance information, see for example \cite{julierUhlmann97CI, Chenfusion02}. The Covariance Intersection (CI) algorithm~\cite{julierUhlmann97CI} provides the optimal safe solution to this problem \cite{reinhardt2015}. 
Let $0\leq\alpha \leq 1$ and consider two possibly correlated estimates for $x$, $(\hat{x},P)$ and $(\hat{x}',P')$. Then the CI fusion algorithm yields
    \begin{equation}\label{eq:CI}
        \begin{split}
            x^+ &= P^+\bigg(\alpha^*P^{-1}\hat{x} + (1-\alpha^*){P'}^{-1}\hat{x}'\bigg) \\
            &= \hat{x}-(1-\alpha^*)P^+ {P'}^{-1}(\hat{x}-\hat{x}'),\\
            P^+ &= \bigg(\alpha^* P^{-1}+ (1-\alpha^*) {P'}^{-1} \bigg)^{-1},\\
            \alpha^* &= \underaccent{0\leq\alpha \leq 1}{\mbox{argmin}}\det\bigg(\alpha P^{-1}+ (1-\alpha) {P'}^{-1} \bigg)^{-1}.
        \end{split}
    \end{equation}
Note that the second formula for $x^+$ given above is not standard and makes explicit how the prior estimate $\hat{x}$ is updated based on a new, possibly correlated estimate $\hat{x}'$.
\begin{lemma}[Least Squares CI]\label{lemma:LSCI}
    
    The first part of the CI algorithm~\eqref{eq:CI} is the minimiser of the convex combination least squares cost
    \begin{equation}
        J(x)=\frac{\alpha^*}{2}\lVert x-\hat{x}\rVert^2_{P^{-1}} + \frac{(1-\alpha^*)}{2}\lVert x-\hat{x}'\rVert^2_{{P'}^{-1}}.
    \end{equation}
\end{lemma}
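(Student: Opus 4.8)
The plan is to exploit that $J$ is a strictly convex quadratic function of $x$, so its unique minimiser is characterised by the vanishing of the gradient, and then to identify that stationary point with the first formula for $x^+$ in~\eqref{eq:CI}. First I would observe that the Hessian of $J$ is $\alpha^* P^{-1} + (1-\alpha^*){P'}^{-1}$, a convex combination (with $0\le\alpha^*\le1$) of the positive definite matrices $P^{-1}$ and ${P'}^{-1}$, hence itself positive definite and in particular invertible; its inverse is exactly the matrix $P^+$ appearing in~\eqref{eq:CI}. This simultaneously guarantees that $J$ has a unique global minimiser and that the closed-form expression below is well defined.

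Next I would compute the gradient. Since $P^{-1}$ and ${P'}^{-1}$ are symmetric, $\nabla_x J(x) = \alpha^* P^{-1}(x-\hat{x}) + (1-\alpha^*){P'}^{-1}(x-\hat{x}')$. Setting this to zero gives $\big(\alpha^* P^{-1} + (1-\alpha^*){P'}^{-1}\big)x = \alpha^* P^{-1}\hat{x} + (1-\alpha^*){P'}^{-1}\hat{x}'$, and left-multiplying by $P^+$ yields $x = P^+\big(\alpha^* P^{-1}\hat{x} + (1-\alpha^*){P'}^{-1}\hat{x}'\big)$, which is precisely the first expression for $x^+$ in~\eqref{eq:CI}. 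To also recover the second, non-standard expression, I would rewrite $\alpha^* P^{-1}\hat{x} + (1-\alpha^*){P'}^{-1}\hat{x}' = (P^+)^{-1}\hat{x} - (1-\alpha^*){P'}^{-1}(\hat{x}-\hat{x}')$ and multiply through by $P^+$, obtaining $x^+ = \hat{x} - (1-\alpha^*)P^+{P'}^{-1}(\hat{x}-\hat{x}')$.

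There is essentially no hard step in this argument: the result is a direct least squares stationarity computation. The only point that warrants an explicit (if brief) justification is the invertibility/positive definiteness of the Hessian $\alpha^* P^{-1} + (1-\alpha^*){P'}^{-1}$ for $\alpha^*\in[0,1]$, since this is what both legitimises dividing through by it and upgrades the stationary point to the unique global minimiser. I would phrase this as: a nonnegatively weighted sum of positive definite matrices with at least one nonzero weight is positive definite.
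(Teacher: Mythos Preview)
Your argument is correct and is exactly the ``direct calculation'' the paper alludes to: set the gradient of the strictly convex quadratic $J$ to zero, invert the Hessian $\alpha^* P^{-1}+(1-\alpha^*){P'}^{-1}=(P^+)^{-1}$, and read off both forms of $x^+$. There is nothing to add; the paper's own proof is the single line ``Follows by direct calculation.''
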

\vspace{2mm}
\begin{proof}
    Follows by direct calculation.
\end{proof}
 
Now we apply Lemma~\ref{lemma:LSCI} to the fusion of information about $x_1$. 

\subsubsection*{Modular Fusion}
    Let $(\hat{x}_1,P_1)$ and $(\hat{x}_2,P_2)$ denote two possibly correlated estimates for $x_1$ and $x_2$, respectively. Let the cost~\eqref{eq:mutualcost} capture the independent uncertainty of a relative measurement $(z_{1,2},W_{1,2})$ which bears information on both $x_1$ and $x_2$.
    Recall $H_i = \nabla_{x_i}h_{1,2}(\hat{x}_1, \hat{x}_2,z_{1,2})$, $i=1,2$, $\tilde{W}_{1} = W_{1,2} + H_{2}P_2H_{2}^{\top}$ and denote $\tilde{W}_{2}\triangleq W_{1,2} + H_{1}P_1H_{1}^{\top}$.
     Then, the following algorithm 
    \begin{equation}\label{eq:modular_fusion}
        \begin{split}
            &\hat{x}_1^+ = \hat{x}_1  -(1-\alpha_1^*)P_1^+H_1^{\top}{\tilde{W}}^{-1}_{1}h_{1,2}(\hat{x}_1,\hat{x}_2,z_{1,2}),\\
            &P_1^+ = \bigg(\alpha_1^* P_1^{-1}+ (1-\alpha_1^*) H_1^{\top}{\tilde{W}}^{-1}_{1}H_1 \bigg)^{-1},\\
            &\alpha_1^* = \underaccent{0\leq\alpha \leq }{\mbox{argmin}}\det\bigg(\alpha P_1^{-1}+ (1-\alpha) H_1^{\top}{\tilde{W}}^{-1}_{1}H_1 \bigg)^{-1},\\
            &\hat{x}_2^+ = \hat{x}_2  -(1-\alpha_2^*)P_2^+H_2^{\top}{\tilde{W}}^{-1}_{2}h_{1,2}(\hat{x}_1,\hat{x}_2,z_{1,2}),\\
            &P_2^+ = \bigg(\alpha_2^* P_2^{-1}+ (1-\alpha_2^*) H_2^{\top}{\tilde{W}}^{-1}_{2}H_2 \bigg)^{-1},\\
            &\alpha_2^* = \underaccent{0\leq\alpha \leq }{\mbox{argmin}}\det\bigg(\alpha P_2^{-1}+ (1-\alpha) H_2^{\top}{\tilde{W}}^{-1}_{2}H_2 \bigg)^{-1}
        \end{split}
    \end{equation}
    yields a modular fusion estimate for the subsystems in figure~\ref{fig:block}. Note that this algorithm inherits the consistency properties of CI~\cite{julierUhlmann97CI}.

\section{Robot-Landmark Localisation Using Bearing Measurements}\label{sec:robot-landmark}
Now let us consider the Robot-Landmark Localisation problem and apply the proposed modular fusion algorithm~\eqref{eq:modular_fusion}.
Table~\ref{tab:mod} contains the mapping between the relevant parameters in Section~\ref{sec:theory} and the ones that we will progressively introduce here.
\begin{table}[!h]
    \centering
    \begin{tabular}{cc}
        Section~\ref{sec:theory} & Section~\ref{sec:robot-landmark}\\ $(\hat{x}_1,P_1)$ & $(\hat{X}_r,P_r)$\\  $(\hat{x}_2,P_2)$ & $(\hat{p}_l,P_l)$\\  $(z_{1,2},W_{1,2})$ & $(z\text{ in Eq.~\eqref{eq:bearing_meas}}, \sigma^2_{\theta_{rl}})$ \\
          $h(x_1,x_2,z_{1,2})$  & $(I-zz^{\top})R_r^{\top}(p_l-p_r)$ \\
    \end{tabular}
    \caption{Parameter mapping between the development in Section~\ref{sec:theory} and the bearing based robot-landmark localization problem}
    \label{tab:mod}
\end{table}
Consider a stationary landmark and a mobile robot equipped with a bearing measurement sensor. The robot's task is to accurately localise the landmark in the global reference frame.  For this task, the mobile robot aims to produce an estimate of its own pose $X_r\in SE(2)$, comprised of its attitude $R_r\in SO(2)$ and its position $p_r\in\mathbb{R}^2$ with respect to the global reference frame, as well as an estimate of the landmark position $p_l\in\mathbb{R}^2$. The robot's pose estimate $(\hat{X}_r,P_r)$ is obtained via dead-reckoning assuming that measurements of linear and angular velocity and robot state measurements from GPS and compass are available. Relative bearing measurements $z\in S^1$ to the landmark position are obtained onboard of the robot and used to update the estimates $(\hat{X}_r,P_r)$ and $(\hat{p}_l,P_l)$ for both the robot pose and the landmark position.  

\subsection{Mobile Robot Model}\label{sec:robot}
Let $x_r, y_r\in\mathbb{R}$ denote the x-y coordinates and let $\theta_r\in S^1$ denote the heading angle of the robot (measured counter-clockwise with respect to the $x$-axis) with respect to the global reference frame. Denote 
\begin{equation}
    p_r\triangleq\begin{bmatrix}
        x_r\\ y_r
    \end{bmatrix},
    \; R_r\triangleq \begin{bmatrix}
        \cos(\theta_r) & -\sin(\theta_r)\\\sin(\theta_r) & \cos(\theta_r)
    \end{bmatrix}.
\end{equation}
We consider a non-holonomic movement model as a subset of SE(2) behaviors, namely the unicycle model where the body-fixed frame velocity of the robot is constrained to the forward direction. Denote by $v_r\in\mathbb{R}$ the forward speed of the robot and by $\omega_r\in\mathbb{R}$ the body-fixed frame yaw rate, respectively.
Denote by $X_r(k)\triangleq [
        x_r(k)\;
        y_r(k)\;
        \theta_r(k)]^{\top}$ and $u_r(k)\triangleq [v_r(k)\;w_r(k)]^{\top}$ the robot pose and twist at time $k$, respectively. 
Note that we are using the more compact $3$-vector representation of $SE(2)$ and not the $3\times3$-matrix representation.
Given a discretization time step $\tau$ the first order Euler discrete-time unicycle model of the mobile robot is 
 \begin{equation}\label{eq:discrete_rtate_kin}
 \begin{split}
    X_r(k+1) &= f_r(X_r(k),u_r(k))\\
    &= X_r(k) + \tau\begin{bmatrix}
        \cos(\theta_r(k)) & 0\\\sin(\theta_r(k)) & 0\\ 0 & 1
    \end{bmatrix}\begin{bmatrix}
        v(k) \\ w(k)
    \end{bmatrix}.
    \end{split}
    \end{equation}
\subsection{Robot Pose Filter Using Robot Specific Measurements}\label{sec:robotequations}
In this section, we assume that we have access to regular forward speed and yaw rate measurements as well as occasional  state measurements of the mobile robot.
We use the prediction and update steps of the EKF to estimate the robot pose together with its error covariance. 

\subsubsection{Prediction} Denote by $v^m = v_r +\delta_v$ and $w^m = w_r+\delta_w$ the measured forward speed and yaw rate, respectively. We assume that these are corrupted by zero mean Gaussian errors $\delta_v\sim \mathcal{N}(0,\sigma_v^2)$ and $\delta_w\sim \mathcal{N}(0,\sigma_w^2)$ with $\sigma_v, \sigma_w>0$ denoting their standard deviations, respectively. Denote by $u_m(k)\triangleq [v^m(k)\; w^m(k)]^{\top}$ the measured twist at time $k$. 

 The discrete-time prediction of the estimated pose is then the expected value of~\eqref{eq:discrete_rtate_kin},  
\begin{equation}\label{eq:discrete_kin}
        \hat{X}_r(k+1) = f_r(\hat{X}_r(k),u^m_r(k)),\; \hat{X}_r(0)=\hat{X}_{r,0}.
\end{equation}
Here, $\hat{X}_{r,0}\triangleq [
    \hat{x}_{r,0}\;\hat{y}_{r,0}\;\hat{\theta}_{r,0}]^{\top}$ denotes the initial pose estimate of the robot. Denote by $P_{r,0}>0$ the estimated error covariance of the initial pose estimate. The EKF prediction step yields
\begin{equation}\label{eq:PredictionCov}
    \begin{split}
        P_r(k+1) &= A_r(k)P_r(k)A_r^{\top}(k) + B_r(k)QB_r^{\top}(k),\\
        P_r(0) &= P_{r,0},\\ 
        A_r(k)&\triangleq \nabla_{X_r}f_r(\hat{X}_r(k),u_m(k))\\
        &=\begin{bmatrix}
            I_{2\times2} & \begin{matrix}-\tau\sin(\hat{\theta}_r(k))v^m(k)\\
            \tau\cos(\hat{\theta}_r(k))v^m(k)\end{matrix}
            \\
            0_{1\times2} & 1
        \end{bmatrix},\\
        B_r(k)&\triangleq \nabla_{u_r}f_r(\hat{X}_r(k),u_m(k))\\
        &=\begin{bmatrix}
            \tau\cos(\hat{\theta}_r(k)) & 0 \\
            \tau\sin(\hat{\theta}_r(k)) & 0\\
            0 &\tau
        \end{bmatrix},\\
        Q&\triangleq\begin{bmatrix}
            \sigma^2_v & 0 \\ 0 & \sigma_w^2
        \end{bmatrix}.
    \end{split}
\end{equation}
\subsubsection{Update} Let us assume that we obtain occasional robot state measurements from devices like GPS and magnetometer onboard the moving robot, 
\begin{equation}\label{eq:robot_measurement}
    Y_r(k) = X_r(k) + \delta_r(k).
\end{equation}
Here, $Y_r(k)\in\mathbb{R}^{3}$ is the full state noisy measurement of $X_r(k)$ and $\delta_r\sim \mathcal{N}(0, \Sigma_r)$ is the measurement error with measurement error covariance matrix $\Sigma_r\in\mathbb{R}^{3\times 3}>0$. We employ an EKF update step to update the robot state and error covariance estimates,
\begin{equation}
    \begin{split}
    K_r(k)&= P_r(k) (P_r(k) + \Sigma_r)^{-1},\\ 
    \hat{X}_r^+ &= \hat{X}_r(k) + K_r(k)(Y_r(k) - \hat{X}_r(k)), \\
    P_r^+ &= (I-K_r(k)) P_r(k). 
    \end{split}
\end{equation}


\subsection{Relative Bearing Measurement }\label{sec:bearing_meas}
   At each time instance $k$ the robot measures a bearing angle $\theta_{rl}^m(k)\in S^1$ to the landmark position,
 \begin{equation}\label{eq:bearing_measurement}
     \theta_{rl}^m(k) = \theta_{rl}(k) + \delta_{\theta_{rl}}(k), 
 \end{equation}
where $\theta_{rl}(k),\delta_{\theta_{rl}}(k)\in S^1$ denote the true bearing angle and measurement error angles, respectively. We assume that $\delta_{\theta_{rl}}(k)\sim \mathcal{N}(0,\sigma^2_{\theta_{rl}})$ where $\sigma_{\theta_{rl}}\in S^1$ is the standard deviation of the measurement error. 
Denote the true bearing and measured bearing by
\begin{equation}\label{eq:bearing_meas}
\begin{split}
    &\phi(k)\triangleq \begin{bmatrix}\cos(\theta_{rl}(k)) \\ \sin(\theta_{rl}(k))  \end{bmatrix} = \frac{R_r^{\top}(k)(p_l - p_r(k))}{\Vert p_l- p_r(k)\Vert}
    \ \text{ and}\\
    & z(k)\triangleq \begin{bmatrix}\cos(\theta_{rl}^m(k)) \\ \sin(\theta_{rl}^m(k))  \end{bmatrix},
\end{split}
\end{equation}
respectively.    
We can use this relative measurement to update the state and error covariance estimates of the robot as well as the state estimate $\hat{p}_l(k)$ and error covariance estimate $P_l(k)\in\mathbb{R}^{2\times 2}$ of the landmark. 
The following cost captures the uncertainty of the relative bearing measurement $z(k)$ with respect to both robot and landmark states, 
 

\begin{equation}\label{eq:jointcost}\begin{split}
    &J_z(p_l,R_r,p_r) = \\
    &\hspace{0.7cm}\frac{1}{2\sigma_{\theta_{rl}}^2}\Vert (I- z(k)z(k)^{\top})R_r^{\top}(k)(p_l-p_r(k))\Vert ^2. 
\end{split}\end{equation}
This cost is as proposed in the target tracking literature~\cite{ lingren1978position} and captures the measurement error in the only non-trivial subspace, perpendicular to the measured bearing $z(k)$ (in the body-fixed frame of the robot at time $k$). Note that the expected value of the measured bearing direction $z$ is equal to the true bearing direction $\phi$. Moreover, in expectation, $I-zz^{\top}$ is a projection onto the subspace (direction) perpendicular to the true bearing direction $\phi$. Hence, $(I-zz^{\top})R_r^{\top}(p_l-p_r) = (I-zz^{\top})\phi\Vert p_l- p_r(k)\Vert=0$ in expectation. Thus, minimising this cost picks a robot-landmark estimate with the least expected bearing error with respect to the measured bearing direction. 



\subsection{Modular Robot-Landmark Localization}\label{sec:mod}
In this section we apply the modular fusion algorithm~\eqref{eq:modular_fusion} to the robot-landmark localization problem.  
Denote
\begin{equation}\label{eq:rl_defs}\begin{split}
 z^{\perp} &= \begin{bmatrix}
                -\sin(\theta_{rl}^m) \\ \cos(\theta_{rl}^m)
              \end{bmatrix}, 
 \ \hat{R}_r \triangleq \begin{bmatrix}
                         \cos(\hat{\theta}_r) & -\sin(\hat{\theta}_r) \\ \sin(\hat{\theta}_r) & \cos(\hat{\theta}_r)
                        \end{bmatrix} \\
 \tilde{z} &\triangleq \hat{R}_rz^{\perp}, \\
 U_r &\triangleq \begin{bmatrix}
            -I_{2\times 2}
             & \begin{matrix} (\hat{y}_l-\hat{y}_r)\\
            -(\hat{x}_l-\hat{x}_r)\end{matrix}
        \end{bmatrix}, \\
 H_r &\triangleq (I-zz^{\top})\hat{R}^{\top}_rU_r = \hat{R}^{\top}_r\tilde{z}\tilde{z}^{\top}U_r, \\
 \tilde{W}_r &\triangleq \left(\sigma_{\theta_{rl}}^2I + H_rP_rH^{\top}_r\right),\ \gamma_r \triangleq \sqrt{\tilde{ z}^{\top}U_rP_rU_r^{\top}\tilde{z}}, \\
 H_l &\triangleq (I-zz^{\top})\hat{R}_r^{\top}, \\
 \tilde{W}_l &\triangleq \left(\sigma_{\theta_{rl}}^2I + H_lP_lH^{\top}_l\right),\ \gamma_l \triangleq \sqrt{\tilde{ z}^{\top}P_l\tilde{z}}.
\end{split}\end{equation}

\subsubsection{Landmark Subsystem} 
Applying Lemma~\ref{lem:NPS} to the joint cost function~\eqref{eq:jointcost} yields
    \begin{equation}
        J^\ast_z(p_l) = \frac{1}{2}\lVert (I- z(k)z(k)^{\top})\hat{R}_r^{\top}(k)(p_l-\hat{p}_r(k))\rVert ^2_{\tilde{W}_r^{-1}}.
    \end{equation}
From~\eqref{eq:CI}, the landmark fusion algorithm is
\begin{equation}\label{eq:FKT_fusion}
    \begin{split}
        \hat{p}^{+}_l &= \hat{p}_l - (1-\alpha_l^*)P^{+}_l  S_l^{-1}(\hat{p}_l- \hat{p}_r), \\
        P^{+}_l &= \left( \alpha_l^*P_l^{-1} + (1-\alpha_l^*)S_l^{-1} \right)^{-1},\\
        S_l^{-1} &\triangleq \tilde{z}\tilde{ z}^{\top}\hat{R}_r\tilde{W}_r^{-1}\hat{R}_r^{\top}\tilde{z}\tilde{ z}^{\top}\\
        &=
        \tilde{z}\tilde{ z}^{\top}(\sigma_{\theta_{rl}}^2I +\tilde{z}\tilde{ z}^{\top}U_rP_rU_r^{\top}\tilde{z}\tilde{ z}^{\top})^{-1}\tilde{z}\tilde{ z}^{\top} \\&= \frac{1}{\sigma_{\theta_{rl}}^2+\gamma_r^2}\tilde{z}\tilde{ z}^{\top}.
    \end{split}
\end{equation}
Using the Sherman-Morrison formula to eliminate matrix inverses this simplifies to
\begin{equation}\label{eq:Baseline_fusion2}
\begin{split}
        \hat{p}^{+}_l  &= \hat{p}_l - (1-\alpha_l^*)\frac{P^{+}_l} {\sigma_{\theta_{rl}}^2+\gamma_r^2}\tilde{z}\tilde{ z}^{\top}(\hat{p}_l- \hat{p}_r), \\
        P^{+}_l &= \frac{1}{\alpha_l^*}\left(P_l -\frac{ P_l\tilde{z}\tilde{z}^{\top}P_l}{\frac{\alpha_l^*(\sigma_{\theta_{rl}}^2+\gamma_r^2)}{1-\alpha_l^*}+\tilde{z}^{\top}P_l\tilde{z} }\right),\\
        \alpha_l^* &= \underaccent{0\leq\alpha \leq 1}{\mbox{argmin}}\det\frac{1}{\alpha}\left(P_l -\frac{ P_l\tilde{z}\tilde{z}^{\top}P_l}{\frac{\alpha(\sigma_{\theta_{rl}}^2+\gamma_r^2)}{1-\alpha}+\tilde{z}^{\top}P_l\tilde{z} }\right).
    \end{split}
\end{equation}  

\subsubsection{Robot Subsystem}
Applying Lemma~\ref{lem:NPS} to the joint cost function~\eqref{eq:jointcost} yields
    \begin{equation}
        J^\ast_z(X_r) = \frac{1}{2}\lVert (I- z(k)z(k)^{\top})R_r^{\top}(\hat{p}_l(k)-p_r)\rVert ^2_{\tilde{W}_l^{-1}}.
    \end{equation}
From~\eqref{eq:CI}, the robot fusion algorithm is
\begin{equation}\label{eq:FKV_fusion}
    \begin{split}
        \hat{X}^{+}_r  &= \hat{X}_r - (1-\alpha_r^*)P_r^{+}  U_r^{\top}S_r^{-1}(\hat{p}_l- \hat{p}_r), \\
        P_r^{+} &= \left( \alpha_r^*P_r^{-1} + (1-\alpha_r^*)U_r^{\top}S_r^{-1}U_r \right)^{-1},\\
        S_r^{-1} &\triangleq \tilde{z}\tilde{ z}^{\top}\hat{R}_r\tilde{W}_l^{-1}\hat{R}_r^{\top}\tilde{z}\tilde{ z}^{\top} \\
        &= \tilde{z}\tilde{ z}^{\top}(\sigma_{\theta_{rl}}^2I +\tilde{z}\tilde{ z}^{\top}P_l\tilde{z}\tilde{ z}^{\top})^{-1}\tilde{z}\tilde{ z}^{\top}\\
        &= \frac{1}{\sigma_{\theta_{rl}}^2+\gamma_l^2}\tilde{z}\tilde{ z}^{\top}.
    \end{split}
\end{equation}
Using the Sherman-Morrison formula to eliminate matrix inverses this simplifies to
\begin{align}\label{eq:FKV_fusion2}
        \hat{X}^{+}_r  &= \hat{X}_r - (1-\alpha_r^*)\frac{P_r^{+}} {\sigma_{\theta_{rl}}^2+\gamma_l^2}U_r^{\top}\tilde{z}\tilde{ z}^{\top}(\hat{p}_l- \hat{p}_r), \notag\\
        P_r^{+} &=\frac{1}{\alpha_r^*}\left(P_r -\frac{ P_rU_r^{\top}\tilde{z}\tilde{z}^{\top}U_rP_r}{\frac{\alpha_r^*(\sigma_{\theta_{rl}}^2+\gamma_l^2)}{1-\alpha_r^*}+\tilde{z}^{\top}U_rP_rU_r^{\top}\tilde{z} }\right),\\
        \alpha_r^* &= \underaccent{0\leq\alpha \leq 1}{\mbox{argmin}}\det \frac{1}{\alpha}\left(P_r -\frac{ P_rU_r^{\top}\tilde{z}\tilde{z}^{\top}U_rP_r}{\frac{\alpha(\sigma_{\theta_{rl}}^2+\gamma_l^2)}{1-\alpha}+\tilde{z}^{\top}U_rP_rU_r^{\top}\tilde{z} }\right). \notag
\end{align}

\subsection{Joint Robot Landmark Algorithm}\label{sec:joint}
In this section we provide a non-modular robot landmark estimation algorithm using the same measurements we used for the modular algorithm. This algorithm will serve as a benchmark for the analysis in Section~\ref{sec:sims}. 
\subsubsection{Joint Model}
 Let us form the joint robot-landmark state $X\triangleq[x_r\;y_r\;\theta_r\;x_l\;y_l]$. Then the kinematics of the joint state model in discrete time is
\begin{equation}
\begin{split}
    X(k+1) &= f(X(k),u(k)),\\
    &=X(k) + 
        \tau\begin{bmatrix}
       \begin{matrix}
           \begin{matrix}
\cos(\theta_r(k))\\\sin(\theta_r(k)) \end{matrix} & 0_{2\times1}\\0 & 1
       \end{matrix} \\0_{2\times 2}
\end{bmatrix}\begin{bmatrix}
        v(k) \\ w(k)
    \end{bmatrix}. 
\end{split}
     \end{equation}
\subsubsection{Joint State Prediction}
Given the joint model, the joint state and error covariance estimates predicted by the EKF are similar to the ones for the robot's pose only,
\begin{align}
        \hat{X}(k+1) &= f(\hat{X}(k), u_m(k)),\notag\\
        P(k+1) &= A(k)P_r(k)A^{\top}(k) + B(k)QB^{\top}(k),\notag\\
        P(0)&=\mbox{diag}(P_{r,0}, P_{l,0}),\\
        A(k)&\triangleq \nabla_{X}f(\hat{X}(k),u_m(k))=\begin{bmatrix}
            \begin{matrix}A_r(k) & 0_{3\times 2}\\
            0_{2\times 3} & I_{2\times 2}\end{matrix}
        \end{bmatrix},\notag\\
        B(k)&\triangleq \nabla_{u}f(\hat{X}(k),u_m(k))=\begin{bmatrix}\begin{matrix}
            B_r(k)\\0_{2\times2}
        \end{matrix}
            \end{bmatrix}.\notag
\end{align}
\subsubsection{GPS/Compass Robot Update}
In terms of the joint state, the measurement equation~\eqref{eq:robot_measurement} is 
\begin{equation}
    Y_r(k) = CX(k) + \delta_r(k),\;C\triangleq \begin{bmatrix}
       \begin{matrix}
           I_{3\times 3}&0_{2\times 2}
       \end{matrix}
    \end{bmatrix}.
\end{equation}
The EKF update of the joint state and error covariance estimates is
\begin{equation}
    \begin{split}
    K(k) &= P(k) C^{\top}(CP(k)C^{\top} + \Sigma_r)^{-1},\\
    \hat{X}^+ &= \hat{X}(k) + K(k)(Y_r(k) - C\hat{X}(k)), \\  
    P^+ &= (I-K(k)C) P(k). 
    \end{split}
\end{equation}

\subsubsection{Joint State Bearing Update}
The bearing measurement~\eqref{eq:bearing_measurement} contains information on both the landmark and the robot states. 
Consider the least squares cost
\begin{equation}\begin{split}
    &J(X)= \frac{1}{2}\Vert X(k) - \hat{X}(k)\Vert^2_{P^{-1}(k)} + J_{z}(X),\\
    &J_{z}(X) 
    =\frac{1}{2\sigma^2_{\theta_{rl}}} \Vert (I- zz^{\top})R_r^{\top}(p_l-p_r)\Vert^2.
\end{split} 
\end{equation}
Recall the definition of $\tilde{z}$ from equation~\eqref{eq:rl_defs} and denote
\begin{equation}\label{eq:U}
 U \triangleq \begin{bmatrix}
            -I & \begin{matrix}
                -(\hat{y}_l-\hat{y}_r)\\
            (\hat{x}_l-\hat{x}_r)
            \end{matrix} & I
        \end{bmatrix}.
\end{equation}
The least squares joint state update equations are then
\begin{equation}\label{eq:Joint_rtate_fusion}
    \begin{split}
        \hat{X}^+  &=\hat{X} - P^+\bigg( U^{\top}S^{-1}(\hat{p}_l-\hat{p}_r)\bigg),\\
        P^+ &= \big(  P^{-1} +  S^{-1}\big)^{-1}= P - \frac{PU^{\top}\tilde{z}\tilde{z}^{\top}UP}{\sigma^2_{\theta_{rl}}+ \tilde{z}^{\top}UPU^{\top}\tilde{z}},\\
        S^{-1} &\triangleq \frac{1}{\sigma^2_{\theta_{rl}}}U^{\top}\tilde{z}\tilde{z}^{\top}U.
    \end{split}
\end{equation}

\section{Simulation Study}\label{sec:sims}
In this section we provide a randomised simulation study to assess the performance of the modular robot-landmark localisation algorithm proposed in Sections~\ref{sec:robotequations} and~\ref{sec:mod}  in comparison to the non-modular (joint state) approach in Section~\ref{sec:joint} which serves as a baseline for this study. We will also include in this study three other versions of the proposed modular algorithm in Section~\ref{sec:mod} that have reduced communication requirements, reduced computation requirements or both. The reduced communication variant only allows sharing of the state estimates and not of the error covariance estimates between subsystems. This means no relative measurement weight matrix adjustment (Lemma~\ref{lem:NPS}) is required. This reduced communication method will be denoted as `Safe' as opposed to the proposed method denoted as `FSafe' (Full communication, Safe in terms of not double counting correlated information). The reduced computation variants of these two methods will neglect the fact that estimates shared between subsystems are correlated and will perform a standard least squares (or Kalman Fusion as in~\cite{julierUhlmann97CI}) instead of the CI least squares fusion step (Lemma~\ref{lemma:LSCI}). This amounts to requiring no optimal convex weight $\alpha^*$ computation ($\alpha^*\longleftarrow 1$ and $(1-\alpha^*)\longleftarrow 1$). These reduced computation methods will be denoted as `Kalman' (reduced communication and computation) and 'FKalman' (Full communication but reduced computation). The baseline non-modular approach will be denoted as 'Joint'.
\subsection{Scenario}\label{sec:sen}
Consider a $30m$ wide square area around the origin with uniformly random initial robot placement $p_r(0)\sim\mathcal{U}([-13, -13]^{\top},[13,13]^{\top})$ and initial landmark placement $p_l\sim\mathcal{U}([-7.5, -7.5]^{\top},[7.5,7.5]^{\top})$.
The robot's initial heading angle is $\theta_r\sim\mathcal{U}(0,2\pi)$.
Their initial estimates are also uniformly randomly chosen as $\hat{p}_r(0),\hat{p}_l(0)\sim\mathcal{U}([-15, -15]^{\top},[15,15]^{\top})$ and 
$\hat{\theta}_r(0)\sim\mathcal{U}(0,2\pi)$ while their initial error covariance estimates are $P_r = \mbox{diag}([100,400,(\pi/18)^2])$ and $P_l = 9000I_{2\times 2}$. 
The standard deviations of the measurement errors are $\sigma_v\sim |\mathcal{N}(0,0.25)|$, $\sigma_w\sim |\mathcal{N}(0,(\pi/90)^2)|$, $\Sigma_r=\mbox{diag}(\sigma_r^2),\;\sigma_r\sim\mathcal{N}(0,\mbox{diag}([25, 25,(7\pi/180)^2)]))$ and $\sigma_{rl}\sim |\mathcal{N}(0,(7\pi/180)^2)|$.
 
All methods receive robot twist measurements at all times but GPS/Compass (robot state) measurements at every $3$ time steps and relative bearing measurements at every $6$ time steps. The robot follows a random walk keeping inside the square area for $T=100$ time steps ($k=0,\cdots,T$) with the fixed forward speed of $v_r(k)\equiv1$ and a semi-random yaw rate of $w_r(k+1)=0.4w_r(k) + 0.6\delta$ where $w_r(0)=-0.07$ and $\delta\sim\mathcal{U}(-\pi/4,\pi/4)$ which reorients towards the origin if the robot would otherwise exit the square area.  
\subsection{Results}
We simulate $20000$ repeats of the randomised scenario in Section~\ref{sec:sen} and compare the performance of the 5 methods mentioned in Section~\ref{sec:sims}. We evaluate the performance of these methods based on the landmark estimation error they achieve at the end of each repeat, namely $e_l(T)\triangleq \Vert p_l - \hat{p}_l(T)\Vert$. Figure~\ref{fig:box} shows a box plot of the distribution of this error across all our experiments.   
\begin{figure}
    \centering
    \includegraphics[width=1\linewidth]{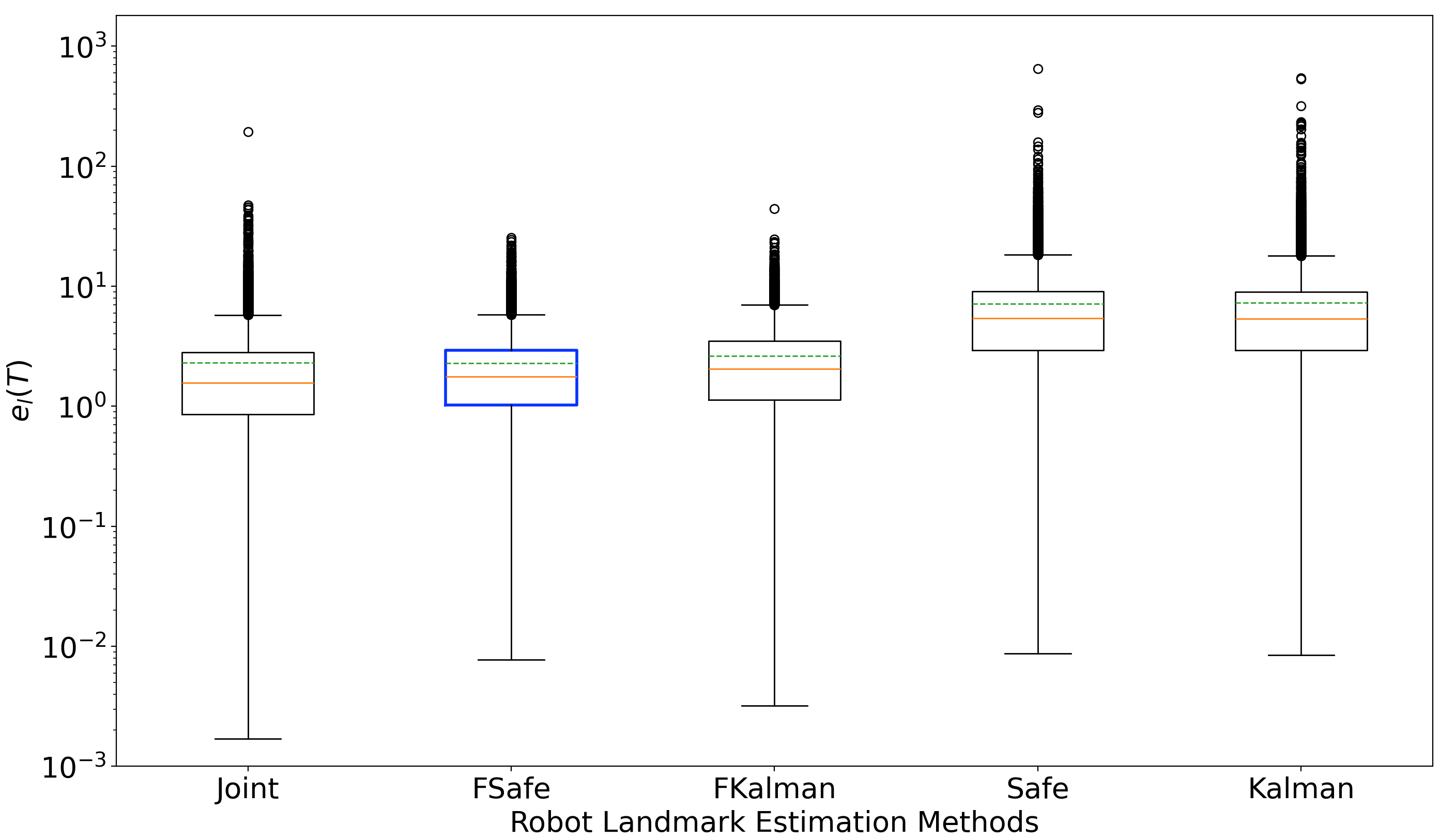}
    \caption{Box plot of the distribution of landmark estimation errors $e_l(T)$ at the end of each experiment. Note that the y-axis is on a logarithmic scale. The box associated with the proposed method, FSafe, is highlighted in blue. The orange line in each box shows the median of the distribution while the box ranges from first to third quartile with whiskers extending to 1.5 times the Inter-Quartile Range (IQR) beyond this. Outliers are shown as small circles. The mean of the distribution is shown as a dashed green line.}
    \label{fig:box}
\end{figure}

\begin{table}
    \centering
    \begin{tabular}{|c   c   c|}
    \hline
        Method & Mean & Std\\
        \hline
        \hline
        Joint & $2.298$ & $  2.853$\\
        \hline
        \textbf{FSafe} & ${\bf 2.275}$ & ${\bf 1.925}$\\
        \hline
        FKalman & $2.637$ & $2.186$\\
        \hline
        Safe & $7.163$ & $8.884$\\
        \hline
        Kalman & $7.32$ & $10.483$\\
        \hline
    \end{tabular}
    \caption{Mean and standard deviation of the distribution of $e_l(T)$ across the $20000$ randomised experiments (including outliers)}
    \label{tab:mstd}
\end{table}
As expected, the `Joint' method has the best overall inlier performance while in terms of outliers it shows higher errors compared to the proposed method thus leading to slightly higher values for the mean and standard deviation of the error distribution (see Table~\ref{tab:mstd} on the following page). These outliers may be due to numerical issues with the larger matrix computations in the `Joint' method. Nevertheless, the 'Joint' method does not enjoy the modularity properties that we have argued for and mostly serves as a comparison baseline for this study. The proposed method `FSafe' has the next best inlier performance while being modular and producing less outliers, thus leading to better performance in terms of mean and standard deviation of the error distribution. Also interesting to notice is that additional savings on computation (`FKalman'), communication (`Safe') or both (`Kalman') are achievable with graceful degradation of performance, at least for the scenario considered in this study. 

 Note that despite the extreme noise regimes considered, our nonlinear least squares bearing based robot-landmark localisation algorithm is very robust and on average achieves a landmark localisation accuracy of close to $2.3$ meters from initial estimation errors of up to $20$ meters. Importantly, the modular approach almost retains the performance of the full dimensional joint estimation algorithm and even produces less outliers, which points to the separation of concerns benefits of the modular approach. 

\section{Conclusions}\label{sec:conclusions}
In this paper we proposed a modular nonlinear least squares filtering methodology. We argued that modularity is desirable for managing complexity (such as the size of state vector representations and error covariance matrices), separation of concerns (if one filter diverges the others would not be immediately affected) and from systems engineering and maintenance perspectives (we could swap one of the two filters for an alternative without changing the other filter). We proposed a methodology that is safe against loops in the information flow graph that appear in the modular approach (in the sense of providing conservative estimates for the error covariance \cite{julierUhlmann97CI}) . We have also provided a derivation of this approach specifically for the 2D robot landmark locatisation problem using relative bearing measurements. Variants of this problem appear in many important robotic applications. We also discussed three variants of our approach that by relaxing requirements on communication, computation or both offer graceful degradation in performance. We provide a large randomised simulation study that demonstrates these ideas and validates our claims in practice.

\bibliographystyle{IEEEtran}
\bibliography{ref}

\begin{thebibliography}{10}
\providecommand{\url}[1]{#1}
\csname url@rmstyle\endcsname
\providecommand{\newblock}{\relax}
\providecommand{\bibinfo}[2]{#2}
\providecommand\BIBentrySTDinterwordspacing{\spaceskip=0pt\relax}
\providecommand\BIBentryALTinterwordstretchfactor{4}
\providecommand\BIBentryALTinterwordspacing{\spaceskip=\fontdimen2\font plus
\BIBentryALTinterwordstretchfactor\fontdimen3\font minus
  \fontdimen4\font\relax}
\providecommand\BIBforeignlanguage[2]{{%
\expandafter\ifx\csname l@#1\endcsname\relax
\typeout{** WARNING: IEEEtran.bst: No hyphenation pattern has been}%
\typeout{** loaded for the language `#1'. Using the pattern for}%
\typeout{** the default language instead.}%
\else
\language=\csname l@#1\endcsname
\fi
#2}}

\bibitem{durrant2006simultaneous}
H.~Durrant-Whyte and T.~Bailey, ``Simultaneous localization and mapping: part
  i,'' \emph{IEEE robotics \& automation magazine}, vol.~13, no.~2, pp.
  99--110, 2006.

\bibitem{ebadi2023present}
K.~Ebadi, L.~Bernreiter, H.~Biggie, G.~Catt, Y.~Chang, A.~Chatterjee, C.~E.
  Denniston, S.-P. Desch{\^e}nes, K.~Harlow, S.~Khattak, \emph{et~al.},
  ``Present and future of slam in extreme environments: The darpa subt
  challenge,'' \emph{IEEE Transactions on Robotics}, vol.~40, pp. 936--959,
  2023.

\bibitem{bar2004estimation}
Y.~Bar-Shalom, X.~R. Li, and T.~Kirubarajan, \emph{Estimation with applications
  to tracking and navigation: theory algorithms and software}.\hskip 1em plus
  0.5em minus 0.4em\relax John Wiley \& Sons, 2004.

\bibitem{roumeliotis2002distributed}
S.~I. Roumeliotis and G.~A. Bekey, ``Distributed multirobot localization,''
  \emph{IEEE Transactions on Robotics and Automation}, vol.~18, no.~5, pp.
  781--795, 2002.

\bibitem{patole2017automotive}
S.~M. Patole, M.~Torlak, D.~Wang, and M.~Ali, ``Automotive radars: A review of
  signal processing techniques,'' \emph{IEEE Signal Processing Magazine},
  vol.~34, no.~2, pp. 22--35, 2017.

\bibitem{park2021review}
S.~Park, S.~Bokijonov, and Y.~Choi, ``Review of microsoft hololens applications
  over the past five years,'' \emph{Applied sciences}, vol.~11, no.~16, p.
  7259, 2021.

\bibitem{uhlmann2003covariance}
J.~K. Uhlmann, ``Covariance consistency methods for fault-tolerant distributed
  data fusion,'' \emph{Information Fusion}, vol.~4, no.~3, pp. 201--215, 2003.

\bibitem{ChenMehra2002estimation}
L.~Chen, P.~O. Arambel, and R.~K. Mehra, ``Estimation under unknown
  correlation: Covariance intersection revisited,'' \emph{IEEE Transactions on
  Automatic Control}, vol.~47, no.~11, pp. 1879--1882, 2002.

\bibitem{Kalman}
{R. E. Kalman}, ``{A new approach to linear filtering and prediction
  problems},'' \emph{{Journal of Basic Engineering, Transactions of the ASME}},
  vol.~82, no.~1, pp. 35--45, {1960}.

\bibitem{julierUhlmann97CI}
S.~J. Julier and J.~K. Uhlmann, ``A non-divergent estimation algorithm in the
  presence of unknown correlations,'' in \emph{Proceedings of the 1997 American
  Control Conference (Cat. No. 97CH36041)}, vol.~4.\hskip 1em plus 0.5em minus
  0.4em\relax IEEE, 1997, pp. 2369--2373.

\bibitem{julier2007using}
------, ``Using covariance intersection for slam,'' \emph{Robotics and
  Autonomous Systems}, vol.~55, no.~1, pp. 3--20, 2007.

\bibitem{arambel2001covariance}
P.~O. Arambel, C.~Rago, and R.~K. Mehra, ``Covariance intersection algorithm
  for distributed spacecraft state estimation,'' in \emph{Proceedings of the
  2001 American Control Conference.(Cat. No. 01CH37148)}, vol.~6.\hskip 1em
  plus 0.5em minus 0.4em\relax IEEE, 2001, pp. 4398--4403.

\bibitem{roumeliotisCI}
L.~C. Carrillo-Arce, E.~D. Nerurkar, J.~L. Gordillo, and S.~I. Roumeliotis,
  ``Decentralized multi-robot cooperative localization using covariance
  intersection,'' in \emph{Intelligent Robots and Systems (IROS), 2013 IEEE/RSJ
  International Conference on}.\hskip 1em plus 0.5em minus 0.4em\relax IEEE,
  2013, pp. 1412--1417.

\bibitem{chang2021resilient}
T.-K. Chang, K.~Chen, and A.~Mehta, ``Resilient and consistent multirobot
  cooperative localization with covariance intersection,'' \emph{IEEE
  Transactions on Robotics}, vol.~38, no.~1, pp. 197--208, 2021.

\bibitem{Anderson}
{B. D. O. Anderson} and {J. Moore}, \emph{{Optimal filtering}}.\hskip 1em plus
  0.5em minus 0.4em\relax Prentice Hall, 1979.

\bibitem{Chenfusion02}
L.~Chen, P.~O. Arambel, and R.~K. Mehra, ``Fusion under unknown
  correlation-covariance intersection as a special case,'' in \emph{Proceedings
  of the Fifth International Conference on Information Fusion. FUSION
  2002.(IEEE Cat. No. 02EX5997)}, vol.~2.\hskip 1em plus 0.5em minus
  0.4em\relax IEEE, 2002, pp. 905--912.

\bibitem{reinhardt2015}
M.~Reinhardt, B.~Noack, P.~Arambel, and U.~Hanebeck, ``Minimum covariance
  bounds for the fusion under unknown correlations,'' \emph{IEEE Signal
  Processing Letters}, vol.~22, pp. 1210--1214, 2015.

\bibitem{lingren1978position}
A.~G. Lingren and K.~F. Gong, ``Position and velocity estimation via bearing
  observations,'' \emph{IEEE Transactions on Aerospace and electronic systems},
  no.~4, pp. 564--577, 1978.

\end{thebibliography}
\end{document}